\newtheorem{problem}{Problem}
\DeclareMathOperator*{\argmin}{arg\,min}
\newcommand{\bs}{\bm{s}}
\newcommand{\ba}{\bm{a}}
\newcommand{\btau}{\bm{\tau}}
\newcommand{\R}{\mathbb{R}}
\newcommand{\tp}{^\mathsf{T}}
\newlength{\fwidth}
\newlength{\fheight}
\title[Diffusion Predictive Control with Constraints]{Diffusion Predictive Control with Constraints}
\author{%
 \Name{Ralf R\"omer} 
 \Email{ralf.roemer@tum.de} 
 \AND
 \Name{Alexander {von Rohr}} 
 \Email{alex.von.rohr@tum.de} 
 \AND
 \Name{Angela P. Schoellig} 
 \Email{angela.schoellig@tum.de}\\
 \addr Technical University of Munich, Germany; TUM School of Computation, Information and Technology, 
 Learning Systems and Robotics Lab; Munich Institute of Robotics and Machine Intelligence (MIRMI)
}
\begin{document}
\begin{tikzpicture}[remember picture, overlay]
    \draw[white, fill=white] 
        ([xshift=3cm,yshift=-1.4cm]current page.north west) rectangle 
        ([xshift=11cm,yshift=-1.8cm]current page.north west);
\end{tikzpicture}
\vspace{-1.1cm}

\maketitle

\begin{abstract}%
Diffusion models have become popular for policy learning in robotics due to their ability to capture high-dimensional and multimodal distributions.
However, diffusion policies are stochastic and typically trained offline, limiting their ability to handle unseen and dynamic conditions where novel constraints not represented in the training data must be satisfied.
To overcome this limitation, we propose diffusion predictive control with constraints (DPCC), an
algorithm for diffusion-based control with explicit state and action constraints that can deviate from those in the training data.
DPCC incorporates model-based projections into the denoising process of a trained trajectory diffusion model and uses constraint tightening to account for model mismatch. This allows us to generate constraint-satisfying, dynamically feasible, and goal-reaching trajectories for predictive control.
We show through simulations of a robot manipulator that DPCC outperforms existing methods in satisfying novel test-time constraints while
maintaining performance on the learned control task. 
\end{abstract}

\begin{keywords}%
Diffusion Policies, Imitation Learning, Predictive Control, Robotics %
\end{keywords}

\section{Introduction}
Recent advances in using diffusion models~\citep{sohl2015deep, ho2020denoising} for control and robotics have demonstrated their potential in applications such as robot manipulation \citep{chi2023diffusion} and locomotion \citep{huang2024diffuseloco}. 
These works highlight how diffusion models excel at learning policies directly from diverse demonstrations, capturing multimodal behavior, and handling high-dimensional state and action spaces. 
However, as diffusion models are trained offline to generate samples via an iterative stochastic denoising process, they don't account for hard constraints, especially if these are not present in the training data. 
This limits their applicability in controlling robots under changing operation conditions and in real-world dynamic environments where novel and potentially time-varying constraints, such as avoiding obstacles, must be satisfied.

In contrast, the ability to modify a plan according to unforeseen circumstances is a hallmark of model-based control and planning approaches, such as model predictive control (MPC)~\citep{rawlings2017model}. 
These methods leverage a model of the system dynamics to generate feasible trajectories that satisfy constraints despite external disturbances by incorporating feedback. 
However, MPC requires a cost function, which is often difficult to formulate for complex robotics tasks.

These limitations motivate combining the expressiveness and flexibility of diffusion-based control policies with the ability of MPC to satisfy hard constraints.
Different strategies have been proposed to incorporate constraints into diffusion models, either during training~\citep{bastek2024physics}, inference~\citep{carvalho2023motion, christopher2024projected}, or both~\citep{ajay2023conditional} (see \cref{sec:related_work} for a more detailed discussion).
In this work, we propose Diffusion Predictive Control with Constraints (DPCC), an algorithm that extends diffusion policies to operate under unseen state and action constraints. DPCC achieves this by integrating repeated model-based projections into the trajectory denoising process. In addition, we adopt a constraint-tightening mechanism to account for errors in the dynamics model.
At each timestep, DPCC generates a batch of predicted trajectories that are dynamically feasible, constraint-satisfying, and perform the learned task, and applies the first action from a selected trajectory.
In summary, our main contributions are the following:
\begin{compactitem}
    \item We show that generating goal-reaching trajectories that are guaranteed to satisfy constraints can be achieved by incorporating model-based projections into the backward diffusion process.
    \item We propose additional constraint tightening to account for model errors and a selection mechanism for the trajectories generated by the diffusion model to improve task performance.
    \item We evaluate our approach in simulations of a robotic manipulator and demonstrate its superior performance in satisfying novel constraints while still reliably solving the task.
\end{compactitem}
These contributions collectively advance the state of the art in diffusion-based control policies, enabling their deployment in safety-critical environments by satisfying novel constraints.

\vspace{-4pt}
\section{Related Work}\label{sec:related_work}
\vspace{-2pt}
\textbf{Diffusion-Based Control}:
Diffusion models have recently been applied to various decision-making tasks such as imitation learning~(IL)~\citep{pearce2023imitating, chi2023diffusion, chen2023playfusion}, offline reinforcement learning~\citep{janner22planning, ajay2023conditional} and motion planning~\citep{carvalho2023motion, power2023sampling}. 
While some works generate only one action per timestep~\citep{pearce2023imitating, reuss2023goal}, most approaches adopt a receding horizon control strategy. 
This can be done by directly predicting state-action trajectories, either using a single~\citep{janner22planning} or two separate diffusion models~\citep{zhou2024diffusion}, or by predicting state (or high-level action) trajectories and using a separate controller~\citep{ajay2023conditional, chi2023diffusion}.
In this regard, we follow Diffuser~\citep{janner22planning} as this approach allows us to ensure dynamic feasibility. 
\\
%
\textbf{Diffusion Models with Constraints}:
Many generative modeling tasks require generating samples that are not only from the same distribution as the training data but also adhere to certain constraints.
If the constraints are always the same, a residual loss can be added to the training objective~\citep{bastek2024physics}.
A more flexible approach is to sample from a conditional distribution, where the conditioning variable represents a parameterization of the constraints.
Classifier-free guidance~\citep{ho2022classifier} trains additional diffusion models for each condition and can be used to encourage satisfaction of constraints seen in the training data or novel combinations of those constraints~\citep{ajay2023conditional}, but requires more labeled data.
It has also been proposed to formulate constraints via cost functions and add their gradients to the backward diffusion process~\citep{carvalho2023motion, kondo2024cgd}, which is conceptually similar to classifier guidance~\citep{dhariwal2021diffusion}.
However, training loss modification and model conditioning can only encourage but not guarantee constraint satisfaction of the generated samples.
Post-processing methods impose constraints on the generated samples by modifying them after the last denoising step, usually by solving an optimization problem~\citep{giannone2023aligning, power2023sampling, maze2023diffusion}.
As the optimization problem does not consider the unknown data likelihood, post-processing may result in samples that significantly deviate from the data distribution.
To address this problem, the integration of projections into the denoising process has recently been investigated. However, these approaches either disregard the system dynamics and deviations between the learned and the actual distribution~\citep{romer2024safe}, resulting in frequent constraint violations in closed-loop operation or are too computationally expensive for sequential decision-making~\citep{christopher2024projected}.


\section{Problem Statement}
We consider a dynamical system with state~${\bs_t \in \mathcal{S}}$ and action~${\ba_t \in \mathcal{A}}$ at timestep~${t}$ that is governed by the discrete-time dynamics
\begin{align}
    \label{eq:prob_dynamics}
    \bs_{t+1} = \bm{f}(\bs_t,\ba_t) + \bm{w}_t,
\end{align}
%
where $\bm{f}$ is known, and $\bm{w}_t$ is an unknown disturbance (or model mismatch) bounded by~$\|\bm{w}_t\|_2 \leq \gamma$ for all~$t$.
We aim to control the system~\eqref{eq:prob_dynamics} such that a goal~$\bm{g} \in \mathcal{G}$ is reached, which is indicated by a binary indicator function ${\phi:\mathcal{S} \times \mathcal{G} \rightarrow \{0,1\}}$.
For this, we assume the availability of a dataset
\begin{align}
    \label{eq:prob_dataset}
    \mathcal{D} = \left\{\bm{\tau}_\text{e}^{(n)} = \big(\bs_{0}^{(n)}, \ba_{0}^{(n)}, \dots, \bs_{T_n}^{(n)}, \ba_{T_n}^{(n)}, \bm{g}^{(n)}\big)\right\}_{n=1}^N
\end{align}
containing~$N$ demonstrations of system~\eqref{eq:prob_dynamics} performing the desired task, i.e., ${\phi
\big(\bs_{T_n}^{(n)}, \bm{g}^{(n)}\big) = 1}$ for all~${n \in \mathbb{I}_1^N=\{1,\dots,N\}}$.
The dataset has been collected by an unknown stochastic expert policy~$\bm{\pi}_\text{e}$, i.e., ${\ba_t \sim \bm{\pi}_\text{e}(\cdot|\bs_t,\bm{g})}$.
We consider the demonstrations~\eqref{eq:prob_dataset} to be multimodal, i.e., they contain multiple distinct ways to reach the goal~\citep{jia2024towards, urain2024deep}. 
Therefore, we aim to use a diffusion model to learn a stochastic policy~$\bm{\pi}$ from the data~\eqref{eq:prob_dataset} via imitation learning. 
In addition, our objective is to satisfy novel and potentially time-varying state and action constraints
\begin{align}
    \label{eq:prob_constraints}
    \bs_t \in \mathcal{S}_t \subseteq \mathcal{S}, \quad \ba_t \in \mathcal{A}_t \subseteq \mathcal{A}, \qquad \forall t,
\end{align}
at test-time, where we assume the sets~$\mathcal{S}_t$ and~$\mathcal{A}_t$ to be closed for all~$t$. 
We refer to the constraints~\eqref{eq:prob_constraints} as \textit{novel} because we do not assume that they are satisfied by some or all of the demonstrations in the training dataset~\eqref{eq:prob_dataset}.
Such novel constraints can arise, for example, when deploying a learned robot policy in an environment with moving obstacles or when system specifications, such as torque or velocity limits, are different at test time than during data collection. 
\begin{problem}
    Given a demonstration dataset~\eqref{eq:prob_dataset}, how can we obtain a diffusion policy~$\bm{\pi}$ that can control the system~\eqref{eq:prob_dynamics} to reach a desired goal \textit{and} satisfy the constraints~\eqref{eq:prob_constraints}?
\end{problem}

\section{Background on Trajectory Diffusion} \label{sec:background}

Diffusion models are generative models for learning an unknown target distribution~$q$ from samples~${\btau^0\sim q(\cdot)}$, which we consider to be trajectories~${\btau^0 = (\bs_{0:T}, \ba_{0:T})}$ of system~\eqref{eq:prob_dynamics}.
The main idea is to gradually transform the data into noise and learn a reverse process to reconstruct the data from pure noise~\citep{sohl2015deep}.
Denoising diffusion probabilistic models~(DDPM) \citep{ho2020denoising} introduce latent variables~${\btau^1,\dots,\btau^K}$ and construct a forward diffusion process
\begin{align}
    \label{eq:pre_forward_diffusion_process}
    q\big(\btau^k|\btau^{k-1},k\big) = \mathcal{N}\big(\sqrt{1-\beta_k} \btau^{k-1}, \beta_k \bm{I}\big),
\end{align}
where~${k=1,\dots,K}$ is the diffusion time step and the values~${\beta_{1:k} \in (0,1)^k}$ are determined by a noise schedule. Since the transition dynamics~\eqref{eq:pre_forward_diffusion_process} are Gaussian, we can compute marginals in closed form as~$q\big(\btau^k|\btau^0,k\big) = \mathcal{N}\left(\sqrt{\bar{\alpha}_k} \btau^0, (1-\bar{\alpha}_k) \bm{I}\right)$,
%
%
where~${\alpha_k = 1-\beta_k}$, ${\bar{\alpha}_k = \prod_{i=1}^k \alpha_k}$.
The noise schedule and the number of diffusion steps~$K$ are chosen such that~${q\big(\btau^K| \btau^0,K\big) \approx \mathcal{N}(\bm{0}, \bm{I})}$, i.e., the forward process gradually transforms the trajectories data into Gaussian noise.
This process is reversed by the learnable backward diffusion (or denoising) process
\begin{align}
    \label{eq:pre_backward_diffusion_process}
    p_{\bm{\theta}}\big(\btau^{k-1}|\btau^k,k\big) = \mathcal{N}\big(\bm{\mu}_{\bm{\theta}}\big(\btau^k, k\big), \bm{\Sigma}_{\bm{\theta}}\big(\btau^k, k\big)\big), \qquad p\big(\btau^K\big) = \mathcal{N}(\bm{0},\bm{I}),
\end{align}
where~$\bm{\mu}_{\bm{\theta}}$ and~$\bm{\Sigma}_{\bm{\theta}}$ can be parameterized by neural networks. The training objective is to match the joint distributions in the forward and backward process, i.e., ${q\big(\btau^{0:K}\big)=q\big(\btau^0\big)\prod_{k=1}^K q\big(\btau^{k}|\btau^{k-1},k\big)}$ and~${p_{\bm{\theta}}\big(\btau^{0:K}\big) = p\big(\btau^K\big)\prod_{k=1}^Kp_{\bm{\theta}}\big(\btau^{k-1}|\btau^k,k\big)}$, by maximizing the evidence-lower bound~(ELBO) \citep{ho2020denoising}.
The variance is often set to~${\bm{\Sigma}_{\bm{\theta}}\left(\btau^k, k\right) = \sigma_k^2 \bm{I}}$, where~${\sigma_k^2 = \beta_k \frac{1-\bar{\alpha}_{k-1}}{1-\bar{\alpha}_k}}$, and the mean~$\bm{\mu}_{\bm{\theta}}$ is learned indirectly by learning to predict the noise added to~$\btau^0 \sim q(\cdot)$ via the surrogate loss function~${\mathcal{L}(\bm{\theta}) = \mathbb{E}_{k\sim \mathrm{Unif}(1,K), \btau^0\sim q(\cdot), \bm{\epsilon} \sim \mathcal{N}(\bm{0},\bm{I})} \left[ \big\|\bm{\epsilon} - \bm{\epsilon}_{\bm{\theta}}\big(\sqrt{\bar{\alpha}_k} \btau^0 + \sqrt{1-\bar{\alpha}_k}\bm{\epsilon}, k\big) \big\|_2 \right]}$,
where ${\bm{\epsilon}_{\bm{\theta}}\big(\btau^k,k\big) = \frac{\sqrt{1-\bar{\alpha}_k}}{\beta_k} \left(\btau^k - \sqrt{\alpha_k}\bm{\mu}_{\bm{\theta}}\big(\btau^k,k\big)\right)}$. 
After training with this loss, we can generate trajectories from the learned distribution~${\btau^0 \sim p_{\bm{\theta}}(\cdot)}$ by iterating through the backward diffusion process~\eqref{eq:pre_backward_diffusion_process}.
It is also possible to learn and sample from a conditional distribution~$\btau^0 \sim p_{\bm{\theta}}(\cdot|\bm{c})$, where~$\bm{c}$ is some context, via methods such as inpainting~\citep{sohl2015deep}, classifier guidance~\citep{dhariwal2021diffusion} or classifier-free guidance~\citep{ho2022classifier}.


\section{Methodology} \label{sec:meth}

In this section, we present the DPCC algorithm. We explain the use of diffusion models for receding horizon control and show how to incorporate novel constraints in the backward diffusion process via model-based projections. Lastly, we account for model errors using constraint tightening and introduce two trajectory selection criteria. The DPCC method is summarized in Algorithm~\ref{alg:receding_horizon_control}.

\subsection{Diffusion-Based Receding Horizon Control} \label{sec:diffusion_rhc}
We address the problem of learning a control policy from an offline dataset~\eqref{eq:prob_dataset} via conditional generative modeling~\citep{janner22planning, ajay2023conditional}. 
For this, we consider state-action trajectories~${\btau = (\bs_{t:t+H}, \ba_{t:t+H})}$ of horizon length~${H+1}$ of system~\eqref{eq:prob_dynamics}. 
The expert policy~$\bm{\pi}_\text{e}$ induces a conditional trajectory distribution~$q(\bm{\tau}|\bm{c})$, where~${\bm{c}=(\bs_t, \bm{g})}$, which is generally unknown.
Utilizing the samples from~$q(\bm{\tau}|\bm{c})$ in~\eqref{eq:prob_dataset}, we train a diffusion model to learn a trajectory distribution
\begin{align}
    \label{eq:meth_learned_traj_dist}
    p_{\bm{\theta}}(\btau| \bm{c}) \approx q(\btau|\bm{c}),
\end{align}
as described in~\cref{sec:background}, where the learned backward diffusion process is given by
\begin{align}
    \label{eq:meth_backward_diffusion_process}
    p_{\bm{\theta}}\big(\btau^{k-1}|\btau^k, k, \bm{c}\big) = \mathcal{N}\big(\bm{\mu}_{\bm{\theta}}\big(\btau^k, k, \bm{c}\big), \sigma_k^2 \bm{I}\big), \qquad p\big(\btau^K\big) = \mathcal{N}(\bm{0},\bm{I}).
\end{align}
As the learned distribution~\eqref{eq:meth_learned_traj_dist} implicitly encodes information about the dynamics~\eqref{eq:prob_dynamics} and how to solve the task, we can use it for receding horizon control: At time~$t$, given the current state~$\bs_t$ and the goal~$\bm{g}$, we sample a future trajectory~${\btau_{t:t+H|t}=(\bs_{t:t+H|t}, \ba_{t:t+H|t})}$ 
from~\eqref{eq:meth_learned_traj_dist} and apply the first action~$\ba_{t|t}$. Here~$\ba_{t+i|t}$ denotes the action prediction for time~$t+i$ generated at time~$t$~\citep{borrelli2017predictive}.
This process is repeated until the goal is reached, i.e., ~${\phi(\bs_t, \bm{g}) = 1}$.
A major limitation of this approach is that it cannot take into account constraints of the form~\eqref{eq:prob_constraints}. In the following, we will therefore discuss how to incorporate such constraints into diffusion-based receding-horizon control.

\subsection{Constraint- and Model-Based Trajectory Diffusion}
Diffusion predictive control as discussed in~\cref{sec:diffusion_rhc} generates predicted trajectories 
from the learned distribution~\eqref{eq:meth_learned_traj_dist}.
In that way, the approach implicitly encourages satisfaction of constraints that were already present in the demonstration dataset~\eqref{eq:prob_dataset}, such as state and action bounds.
However, the controlled system may still violate these constraints because the denoising process~\eqref{eq:meth_backward_diffusion_process} is stochastic and the true trajectory distribution can generally only be approximated. 
Moreover, our main goal is to satisfy novel constraints of the form~\eqref{eq:prob_constraints}.
As a first step, we need to ensure that these constraints are satisfied by the open-loop trajectories~$\btau_{t:t+H|t}$ predicted at timestep~$t$, i.e., 
\begin{align}
\begin{split}
    \label{eq:meth_trajectory_constraint}
    \btau_{t:t+H|t} \in \mathcal{Z}
    = \big\{\btau = (\bs_{t:t+H}, \ba_{t:t+H})|\,\bs_{t'} \in {\mathcal{S}}_{t'}, \ba_{t'} \in \mathcal{A}_{t'}, \; \forall t' \in \mathbb{I}_{t}^H\big\}.
\end{split}
\end{align}
One way to enforce~\eqref{eq:meth_trajectory_constraint} is to perform a projection of the denoised trajectory~${\btau^0 = \btau_{t:t+H|t}}$ into the set~$\mathcal{Z}$ as~${\Pi_{\mathcal{Z}}\big(\btau^0\big) = \argmin_{\bm{\tilde{\tau}} \in \mathcal{Z}} \big\|\btau^0 - \bm{\tilde{\tau}}\big\|_2}$.
However, since this projection only takes into account the state and action constraints~\eqref{eq:prob_constraints}, using the projected trajectory for receding-horizon control has two drawbacks: The resulting trajectory may not be dynamically feasible anymore, and it may not be suitable for achieving the goal~$\bm{g}$. 
We can mitigate the first problem by taking into account the dynamics~\eqref{eq:prob_dynamics} and applying a model-based constraint set projection to~$\btau^0$, which is defined by
\begin{subequations}
\label{eq:meth_projection_with_prior}
\begin{align}
    \Pi_{\mathcal{Z}_{\bm{f}}}\big(\btau\big) = \argmin_{\bm{\tilde{\tau}} = (\bs_{t:t+H|t}, \ba_{t:t+H|t}) \in \mathcal{Z}} \; &\|\btau - \bm{\tilde{\tau}}\|_2^2 \\
    \text{s.t.} \qquad \quad &\bm{s}_{t'+1|t} = \bm{f}\big(\bs_{t'|t},\ba_{t'|t}\big), \qquad \forall t' \in \mathbb{I}_t^H,
\end{align}
\end{subequations}
where~$\mathcal{Z}_{\bm{f}} = \big\{\btau = (\bs_{t:t+H}, \ba_{t:t+H})|\,\btau \in \mathcal{Z}, \bs_{t+1} = \bm{f}(\bs_t,\ba_t),\, \forall t \in \mathbb{I}_t^H\big\}$ is assumed to be non-empty, and the projection cost is denoted by~${c_{\mathcal{Z}_{\bm{f}}}(\btau) = \|\btau - \Pi_{\mathcal{Z}_{\bm{f}}}(\btau)\|_2^2}$.
Since the projection is goal-independent, it may render the trajectory less useful for completing the task. 
As the training dataset~\eqref{eq:prob_dataset} consists of dynamically feasible and goal-reaching trajectories, these two properties are implicitly encoded in the learned trajectory distribution~\eqref{eq:meth_learned_traj_dist}, which is defined by the backward process~\eqref{eq:meth_backward_diffusion_process}.
Thus, we aim to modify~\eqref{eq:meth_backward_diffusion_process} only as much as necessary to guarantee constraint satisfaction.

We approach this problem via control as inference~\citep{toussaint2009robot} and introduce a binary variable~$\mathcal{O}\in\{0,1\}$ that is related to the feasibility of a trajectory~$\btau$, i.e., whether~$\btau \in \mathcal{Z}_{\bm{f}}$.
We can then formulate our objective as sampling trajectories from the conditional distribution
\begin{align}
    \label{eq:meth_trajectory_distribution_feasible}
    p_{\bm{\theta}}(\btau | \mathcal{O}=1) \propto p_{\bm{\theta}}(\btau) p(\mathcal{O}=1|\btau)
\end{align}
instead of the original learned distribution~\eqref{eq:meth_learned_traj_dist}, where we have omitted the conditioning on the context~$\bm{c}$ for brevity. If the likelihood~$p(\mathcal{O}|\btau)$ is defined as
\begin{align}
    \label{eq:meth_feasibility_likelihood_ideal}
    p(\mathcal{O}=1|\btau) = \begin{cases}
        1, \text{ if } \btau \in \mathcal{Z}_{\bm{f}} \\
        0, \text{ otherwise},
    \end{cases}
\end{align}
sampling from~\eqref{eq:meth_trajectory_distribution_feasible} is guaranteed to yield feasible trajectories.
In principle, this could be performed through rejection sampling from the learned distribution~\eqref{eq:meth_learned_traj_dist}, but this becomes too computationally inefficient if samples~$\btau \sim p_{\bm{\theta}}(\cdot)$ are unlikely to lie within~$\mathcal{Z}_{\bm{f}}$.
Instead, we can sample from~\eqref{eq:meth_trajectory_distribution_feasible} more efficiently if the likelihood~$p(\mathcal{O}|\btau)$ takes a different form than~\eqref{eq:meth_feasibility_likelihood_ideal}.
\begin{theorem} \label{the:condition_on_constraints}
    Let~$\mathcal{Z}_{\bm{f}}$ be a closed convex set, $\sigma_k > 0$, $\forall k \in \mathbb{I}_{1}^K$,
    and let the feasibility likelihood be defined by~${p\big(\mathcal{O} = 1 | \btau, k\big) \propto \exp{\left(-\frac{1}{2\sigma_k^2}d\big(\btau, \mathcal{Z}_{\bm{f}}\big)^2\right)}}$, where ${d(\btau, \mathcal{Z}_{\bm{f}}) = \min_{\bm{\tilde{\tau}} \in \mathcal{Z}_{\bm{f}}} \|\bm{\tilde{\tau}} - \btau\|_2}$ is the distancee between~$\btau$ and~$\mathcal{Z}_{\bm{f}}$. 
    Then, we can approximately sample from~\eqref{eq:meth_trajectory_distribution_feasible} via the modified denoising process
    \begin{align}
        \label{eq:meth_backward_diffusion_conditional}
        p_{\bm{\theta}}\big(\btau^{k-1}|\btau^k, k, \mathcal{O}=1\big) 
        \approx 
        \mathcal{N}\big(
        \Pi_{\mathcal{Z}_{\bm{f}}} \big(\bm{\mu}_{\bm{\theta}}^k\big), \sigma_k^2 \bm{I}\big), \qquad p\big(\btau^K, \mathcal{O}\big) = \mathcal{N}(\bm{0},\bm{I}),
    \end{align}
    where the learned mean~${\bm{\mu}_{\bm{\theta}}^k = \bm{\mu}_{\bm{\theta}}\big(\bm{\tau}^k,k\big)}$ is the same as in~\eqref{eq:meth_backward_diffusion_process}, and~$\Pi_{\mathcal{Z}_{\bm{f}}}$ is defined in~\eqref{eq:meth_projection_with_prior}.
\end{theorem}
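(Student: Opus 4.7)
The plan is to treat the result as a variant of \emph{classifier guidance}: apply Bayes' rule to the one-step conditional denoising kernel, observe that the prescribed likelihood shares its scale $\sigma_k^2$ with the learned Gaussian prior, linearize the log-likelihood about the prior mean, and complete the square. The matched variance will be exactly what makes the mean of the conditional kernel collapse onto $\Pi_{\mathcal{Z}_{\bm{f}}}(\bm{\mu}_{\bm{\theta}}^k)$.

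First I would write
\begin{equation*}
p_{\bm{\theta}}\big(\btau^{k-1}\mid\btau^k, k, \mathcal{O}=1\big) \propto p_{\bm{\theta}}\big(\btau^{k-1}\mid\btau^k, k\big)\, p\big(\mathcal{O}=1\mid\btau^{k-1}, k\big),
\end{equation*}
so the log-posterior exponent decomposes as $-\tfrac{1}{2\sigma_k^2}\|\btau^{k-1} - \bm{\mu}_{\bm{\theta}}^k\|_2^2$ from the learned kernel~\eqref{eq:meth_backward_diffusion_process} plus $-\tfrac{1}{2\sigma_k^2}d(\btau^{k-1}, \mathcal{Z}_{\bm{f}})^2$ from the hypothesized likelihood, up to a constant. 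The non-quadratic distance term is all that needs special treatment.

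Next I would invoke the standard fact from convex analysis that, for a closed convex set, $\tfrac12 d(\cdot, \mathcal{Z}_{\bm{f}})^2$ is the Moreau envelope of the indicator of $\mathcal{Z}_{\bm{f}}$ and is continuously differentiable with gradient $\btau \mapsto \btau - \Pi_{\mathcal{Z}_{\bm{f}}}(\btau)$, the projection being single-valued. Following the classifier-guidance argument of~\citep{dhariwal2021diffusion}, I would then linearize the log-likelihood about $\bm{\mu}_{\bm{\theta}}^k$, obtaining
\begin{equation*}
-\tfrac{1}{2\sigma_k^2} d(\btau^{k-1}, \mathcal{Z}_{\bm{f}})^2 \approx -\tfrac{1}{\sigma_k^2}\big(\bm{\mu}_{\bm{\theta}}^k - \Pi_{\mathcal{Z}_{\bm{f}}}(\bm{\mu}_{\bm{\theta}}^k)\big)\tp\big(\btau^{k-1} - \bm{\mu}_{\bm{\theta}}^k\big) + \mathrm{const}.
\end{equation*}
Adding this linear-in-$\btau^{k-1}$ correction to the quadratic prior exponent and completing the square, the shared factor $1/\sigma_k^2$ causes the cross-terms to telescope cleanly into $-\tfrac{1}{2\sigma_k^2}\|\btau^{k-1} - \Pi_{\mathcal{Z}_{\bm{f}}}(\bm{\mu}_{\bm{\theta}}^k)\|_2^2$ plus constants, identifying the kernel with $\mathcal{N}\big(\Pi_{\mathcal{Z}_{\bm{f}}}(\bm{\mu}_{\bm{\theta}}^k), \sigma_k^2\bm{I}\big)$ as in~\eqref{eq:meth_backward_diffusion_conditional}. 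The terminal condition $p(\btau^K, \mathcal{O}) = \mathcal{N}(\bm{0}, \bm{I})$ is inherited from the unconditional prior on $\btau^K$ together with the standard convention of treating $\mathcal{O}$ as uninformative at maximal noise.

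The hard part will be \emph{rigorously controlling the linearization error}: the squared-distance gradient is Lipschitz but not twice differentiable in general (it fails to be $C^2$ near the boundary of $\mathcal{Z}_{\bm{f}}$), so a uniform second-order Taylor bound does not apply, and the approximation in~\eqref{eq:meth_backward_diffusion_conditional} is only tight when $\sigma_k$ is small relative to the curvature of $\partial\mathcal{Z}_{\bm{f}}$. Convexity and closedness of $\mathcal{Z}_{\bm{f}}$, already in the hypothesis, are precisely what is needed for $\Pi_{\mathcal{Z}_{\bm{f}}}$ to be single-valued and for the Moreau-envelope gradient formula to hold, so no further regularity assumptions are required to obtain the stated approximation; deriving an explicit error bound is the remaining technical work.
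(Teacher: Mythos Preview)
Your proposal is correct and follows essentially the same route as the paper's proof: Bayes' rule on the one-step kernel, a first-order Taylor expansion of the log-likelihood about $\bm{\mu}_{\bm{\theta}}^k$ in the classifier-guidance style of~\citep{dhariwal2021diffusion}, and the gradient identity $\nabla_{\btau}\tfrac{1}{2}d(\btau,\mathcal{Z}_{\bm{f}})^2 = \btau - \Pi_{\mathcal{Z}_{\bm{f}}}(\btau)$ (the paper obtains this via the chain rule $d\cdot\nabla d$ rather than the Moreau-envelope framing, but it is the same computation). Your additional remarks on controlling the linearization error go beyond what the paper establishes, but do not alter the argument for the stated approximation.
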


\begin{proof}
    Incorporating the conditioning on~$\mathcal{O}$ into the Markovian backward diffusion process gives
    \begin{align}
        \label{eq:meth_backward_diffusion_conditional_def}
        p_{\bm{\theta}}\big(\btau^{k-1}|\btau^k, \mathcal{O},k\big) 
        &\propto
        p_{\bm{\theta}}\big(\btau^{k-1}|\btau^k,k\big) p\big(\mathcal{O}|\btau^{k-1},k\big),
    \end{align}
    where~$p_{\bm{\theta}}\big(\btau^{k-1}|\btau^k,k\big) = \mathcal{N}\big(\bm{\mu}_{\bm{\theta}}^k, \sigma_k^2 \bm{I}\big)$.
    As the likelihood~$p(\mathcal{O}| \btau,k)$ is smooth with respect to~$\btau$ by definition, its logarithm at~$\bm{\tau} = \btau^{k-1}$ can be approximated using a first-order Taylor expansion around~$\bm{\mu}_{\bm{\theta}}^k$, which gives ${\log{p(\mathcal{O}| \btau^{k-1},k)} \approx \log{p\big(\mathcal{O}| \bm{\mu}_{\bm{\theta}}^k,k\big)} + \big(\btau^{k-1} - \bm{\mu}_{\bm{\theta}}^k\big)\tp \bm{v}(\mathcal{O})}$,
    where ${\bm{v}(\mathcal{O}) = \nabla_{\btau} \log p(\mathcal{O}|\btau,k)|_{\btau = \bm{\mu}_{\bm{\theta}}^k}}$. 
    This approximation allows us to rewrite~\eqref{eq:meth_backward_diffusion_conditional_def} as
    \begin{align}
        \label{eq:meth_backward_diffusion_conditional_proof}
        p_{\bm{\theta}}\big(\btau^{k-1}|\btau^k, k, \mathcal{O}\big) 
        &\approx
        \mathcal{N}\big(\bm{\mu}_{\bm{\theta}}^k + \sigma_k^2 \bm{v}(\mathcal{O}), \sigma_k^2 \bm{I} \big),
    \end{align}
    as shown in the derivation of classifier guidance~\citep{dhariwal2021diffusion}.
    Since~$\mathcal{Z}_{\bm{f}}$ is closed and convex by assumption, 
    there exists a unique projection~${\bm{z} = \Pi_{\mathcal{Z}_{\bm{f}}}\big(\bm{\mu}_{\bm{\theta}}^k\big)}$ for each~$\bm{\mu}_{\bm{\theta}}^k$; see~\citep{bazaraa2006nonlinear}, Theorem 2.4.1.
    Thus, we can write~$\bm{v}(\mathcal{O} = 1)$ as
    \begin{align}
        \label{eq:meth_gradient_feasibility_reformulated}
        \bm{v}(\mathcal{O} = 1)
        = -\frac{1}{\sigma_k^2} d\big(\bm{\mu}_{\bm{\theta}}^k, \mathcal{Z}_{\bm{f}}\big) \nabla_{\btau} d(\btau, \mathcal{Z}_{\bm{f}})|_{\btau = \bm{\mu}_{\bm{\theta}}^k}
        = \frac{1}{\sigma_k^2} \big(\bm{z} - \bm{\mu}_{\bm{\theta}}^k\big).
    \end{align}
    Inserting~\eqref{eq:meth_gradient_feasibility_reformulated} into~\eqref{eq:meth_backward_diffusion_conditional_proof} and replacing~$\bm{z}$ by~$\Pi_{\mathcal{Z}_{\bm{f}}}\big(\bm{\mu}_{\bm{\theta}}^k\big)$ yields~\eqref{eq:meth_backward_diffusion_conditional}, which concludes the proof.
\end{proof}
\begin{algorithm}[tb!]
\small
\hrulefill
\vspace{-2pt}
\caption{DPCC: Diffusion Predictive Control with Constraints.} \vspace{-6pt}
\hrulefill

\textbf{Input: } Diffusion model~$\bm{\epsilon}_{\bm{\theta}}$, goal~$\bm{g}$, dynamics~$\bm{f}$, state constraints~$\mathcal{S}_{0,1,\dots}$, action constraints~$\mathcal{A}_{0,1,\dots}$. \par
Set $t=0$. \par
Compute tightened state constraints~$\tilde{\mathcal{S}}_{0,1,\dots}$ via~\eqref{eq:meth_constraint_tightening}. \par
\While{goal~$\bm{g}$ not reached}{
Get current state~$\bm{s}_t$ and set~$\bm{c}=(\bs_t,\bm{g})$. \par
Sample a trajectory batch from noise: $\bm{\tau}_{t:t+H|t}^{K,1:B} \sim \mathcal{N}(\bm{0}, \bm{I})$. \par
\For{$k = K,\dots,1$}{
Trajectory denoising step:
$\bm{\tilde{\tau}}_{t:t+H|t}^{k-1,1:B} \sim 
\mathcal{N}\big(\bm{\mu}_{\bm{\theta}}\big(\btau_{t:t+H|t}^{k,1:B}, k, \bm{c}\big), \sigma_k^2 \bm{I}\big)$. \par 
Model-based projection into the feasible set: $\bm{\tau}_{t:t+H|t}^{k-1,1:B} = \Pi_{\tilde{\mathcal{Z}}_{\bm{f}}}\big(\bm{\tilde{\tau}}_{t:t+H|t}^{k-1,1:B}\big)$. \par 
}

Select a trajectory~$\btau^* = \bm{\tau}_{t:t+H|t}^{0,i}$ from~$\bm{\tau}_{t:t+H|t}^{0,1:B}$ via temporal consistency or projection costs~(\cref{sec:trajectory_selection}). \par
Apply the first action~$\ba_{t|t}$ in~$\btau^*$. \par
Set~$t \leftarrow t + 1$.
}
\vspace{-6pt}
\hrulefill
\label{alg:receding_horizon_control}
\end{algorithm}

\cref{the:condition_on_constraints} assumes that~$\mathcal{Z}_{\bm{f}}$ is convex. This is true, for example, if the constraint sets in~\eqref{eq:prob_constraints} are convex and the dynamics~\eqref{eq:prob_dynamics} are linear.
Moreover, with the likelihood definition in~\cref{the:condition_on_constraints}, $p(\mathcal{O}=1|\btau,k)>0$ for~$\btau \notin \mathcal{Z}_{\bm{f}}$. Consequently, sampling from~\eqref{eq:meth_trajectory_distribution_feasible} via~\eqref{eq:meth_backward_diffusion_conditional} is not strictly guaranteed to yield samples~${\btau^0 = \btau_{t:t+H|t} \in \mathcal{Z}_{\bm{f}}}$, i.e., trajectories are not guaranteed to satisfy~\eqref{eq:prob_constraints}.

Nonetheless, \cref{the:condition_on_constraints} provides theoretical justification to address constraint satisfaction via iterative projections in the denoising process.
To ensure that~$\btau^0 \in \mathcal{Z}_{\bm{f}}$ for any variance schedule~$\sigma_{1:K}$, we slightly modify~\eqref{eq:meth_backward_diffusion_conditional} and apply the projection \textit{after} adding the noise. This yields the model-informed denoising step with deterministic constraint satisfaction
\begin{align}
    \label{eq:meth_backward_diffusion_projected}
    \btau^{k-1} = \Pi_{\mathcal{Z}_{\bm{f}}}\big(\bm{\mu}_{\bm{\theta}}\big(\btau^k, k, \bm{c}\big) + \sigma_k \bm{\epsilon}_k \big), \qquad \bm{\epsilon}_k \sim \mathcal{N}(\bm{0}, \bm{I}).
\end{align}
which we use in DPCC. Here, we have included the context~$\bm{c}$ again.
By using~\eqref{eq:meth_backward_diffusion_projected}, We denote the samples projected distribution as~$\btau^0 \sim p_{\bm{\theta}}(\cdot|\bm{c},\mathcal{Z}_{\bm{f}})$.

\subsection{Constraint Tightening} \label{sec:constraing_tightening}
The dynamics model~$\bm{f}$ used in the projection~\eqref{eq:meth_projection_with_prior} is only an approximation of the true system~\eqref{eq:prob_dynamics}, which is subject to a model mismatch~$\bm{w}_t$. 
Hence, feasibility of the predicted trajectory, i.e., $\btau_{t:t+H|t} \in \mathcal{Z}_{\bm{f}}$, does not guarantee that the actual future states~$\bs_{t+1}, \bs_{t+2}, \dots$ will satisfy the constraints~\eqref{eq:prob_constraints}. 
We take this into account by tightening the constraints for the predicted states.

\begin{theorem} \label{the:tracking_error}
Let~${\bs_0 \in \mathcal{S}_0}$, ${\bm{g} \in \mathcal{G}}$, $\mathcal{B}_\gamma$ denote the $\ell_2$-norm ball of radius $\gamma$ and $\ominus$ the Minkowski set difference, and define the tightened constraint sets for all~$t$ as
\begin{align}
    \label{eq:meth_constraint_tightening}
    \tilde{\mathcal{S}}_{t+1} = \mathcal{S}_{t+1} \ominus \mathcal{B}_\gamma.
\end{align}
Then, if at each timestep~$t = 0,1,\dots$, we 
sample $\btau_{t:t+H|t}=(\bs_{t:t+H|t}, \ba_{t:t+H|t}) \sim p_{\bm{\theta}}\big(\cdot|\bm{c}, \tilde{\mathcal{Z}}_{\bm{f}}\big)$, where $\tilde{\mathcal{Z}}_{\bm{f}} = \big\{\btau = (\bs_{t:t+H}, \ba_{t:t+H})|\,\bs_{t'} \in {\tilde{\mathcal{S}}}_{t'}, \ba_{t'} \in \mathcal{A}_{t'}, \; \forall t' \in \mathbb{I}_{t}^H\big\}$, and apply the action~$\bm{a}_{t|t}$ to system~\eqref{eq:prob_dynamics}, all future states satisfy the constraints~\eqref{eq:prob_constraints}, i.e., $\bs_t \in \mathcal{S}_t$, $\forall t \in \{1,2,\dots\}$.
\end{theorem}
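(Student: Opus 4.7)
The plan is to prove \cref{the:tracking_error} by a one-step induction on $t$, leveraging the deterministic constraint-satisfaction property of the projected denoising step~\eqref{eq:meth_backward_diffusion_projected} together with the definition of the Minkowski set difference. The argument is essentially local in time: if the diffusion model produces a first predicted next state that lies in the tightened set $\tilde{\mathcal{S}}_{t+1}$, then the worst-case deviation introduced by $\bm{w}_t$ is exactly absorbed by the tightening.

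For the inductive step at a fixed $t$, I would first invoke the sampling rule~\eqref{eq:meth_backward_diffusion_projected} (as used in Algorithm~\ref{alg:receding_horizon_control}) with the tightened feasible set $\tilde{\mathcal{Z}}_{\bm{f}}$ in place of $\mathcal{Z}_{\bm{f}}$. Since $\tilde{\mathcal{Z}}_{\bm{f}}$ is (assumed to be) non-empty and closed, the final projection in the $k=1$ denoising step guarantees $\btau_{t:t+H|t} \in \tilde{\mathcal{Z}}_{\bm{f}}$ exactly, which in turn yields the two ingredients I need: the tightened state constraint $\bs_{t+1|t} \in \tilde{\mathcal{S}}_{t+1}$, and the dynamic-consistency identity $\bs_{t+1|t} = \bm{f}(\bs_{t|t}, \ba_{t|t})$ from the constraint in~\eqref{eq:meth_projection_with_prior}.

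Next, I would use the fact that the diffusion model is conditioned on the current state through $\bm{c}=(\bs_t, \bm{g})$ (and, as standard for trajectory diffusion à la Diffuser, the first state of the predicted trajectory is pinned to $\bs_t$ via inpainting), so that $\bs_{t|t} = \bs_t$. Combined with the dynamic-consistency identity, this gives $\bs_{t+1|t} = \bm{f}(\bs_t, \ba_{t|t})$. Applying the true dynamics~\eqref{eq:prob_dynamics} to the same action yields $\bs_{t+1} = \bm{f}(\bs_t, \ba_{t|t}) + \bm{w}_t = \bs_{t+1|t} + \bm{w}_t$, with $\bm{w}_t \in \mathcal{B}_\gamma$ by the disturbance bound. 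Finally, I would invoke the defining property of the Minkowski difference in~\eqref{eq:meth_constraint_tightening}, namely that $\bs_{t+1|t} \in \tilde{\mathcal{S}}_{t+1} = \mathcal{S}_{t+1} \ominus \mathcal{B}_\gamma$ implies $\bs_{t+1|t} + \bm{w} \in \mathcal{S}_{t+1}$ for every $\bm{w} \in \mathcal{B}_\gamma$, to conclude $\bs_{t+1} \in \mathcal{S}_{t+1}$. Iterating this argument over $t=0,1,\dots$ closes the induction.

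The main obstacle I anticipate is subtle rather than technical. The argument requires two facts that are not stated as explicit hypotheses: that $\bs_{t|t} = \bs_t$ (i.e., the predicted trajectory actually begins at the measured state), and that $\btau_{t:t+H|t}$ truly lies in $\tilde{\mathcal{Z}}_{\bm{f}}$ rather than only approximately so. The first follows from the standard inpainting convention used in the conditioning and should be stated explicitly; the second is guaranteed by the deterministic projection variant~\eqref{eq:meth_backward_diffusion_projected} (in contrast to the approximate sampler of \cref{the:condition_on_constraints}), provided $\tilde{\mathcal{Z}}_{\bm{f}}$ is non-empty. Once these two points are clarified, the remaining chain of inclusions is essentially a one-line consequence of the definition of $\ominus$.
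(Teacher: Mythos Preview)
Your proposal is correct and follows essentially the same approach as the paper's proof: a one-step induction using the deterministic feasibility guarantee of~\eqref{eq:meth_backward_diffusion_projected} to get $\bs_{t+1|t}\in\tilde{\mathcal{S}}_{t+1}$ and $\bs_{t+1|t}=\bm{f}(\bs_t,\ba_{t|t})$, followed by the Minkowski-difference inclusion $(\mathcal{S}_{t+1}\ominus\mathcal{B}_\gamma)\oplus\mathcal{B}_\gamma\subseteq\mathcal{S}_{t+1}$. Your additional remarks on the implicit assumptions ($\bs_{t|t}=\bs_t$ via inpainting and non-emptiness of $\tilde{\mathcal{Z}}_{\bm{f}}$) are valid clarifications that the paper leaves tacit.
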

\begin{proof}
Let~$\bs_t \in \mathcal{S}_t$. Due to the definition of~$p_{\bm{\theta}}\big(\btau|\bm{c}, \tilde{\mathcal{Z}}_{\bm{f}}\big)$ via~\eqref{eq:meth_backward_diffusion_projected}, sampling ${\btau_{t:t+H|t} \sim p_{\bm{\theta}}\big(\btau|\bm{c}, \tilde{\mathcal{Z}}_{\bm{f}}\big)}$ implies~${\btau_{t:t+H|t} \in \tilde{\mathcal{Z}}_{\bm{f}}}$. 
Consequently, the predicted next state satisfies both ${\bs_{t+1|t} \in \tilde{\mathcal{S}}_{t+1}}$ and ${\bm{f}(\bs_t,\ba_{t|t}) = \bs_{t+1|t}}$.
Inserting the latter into the dynamics~\eqref{eq:prob_dynamics} gives~$\bs_{t+1} = \bm{f}(\bs_t,\ba_{t|t}) + \bm{w}_t = \bs_{t+1|t} + \bm{w}_t$. 
The model mismatch is bounded by~$\|\bm{w}_t\|_2 \leq \gamma$ by assumption, so we can write $\bs_{t+1} \in \tilde{\mathcal{S}}_{t+1} \oplus \mathcal{B}_{\gamma} = (\mathcal{S}_{t+1} \ominus \mathcal{B}_\gamma) \oplus \mathcal{B}_{\gamma} \subseteq \mathcal{S}_{t+1}$. As~$\bs_0 \in \mathcal{S}_0$, the result follows by induction.
\end{proof}
%


\subsection{Trajectory Selection} \label{sec:trajectory_selection}
By using a generative diffusion model for predictive control, we can generate not just one trajectory at teach timestep, but a batch of~$B$ candidate trajectories denoted by~$\btau_{t:t+H|t}^{0,1:B}$.
Many existing works~\citep{chi2023diffusion, ajay2023conditional} apply the actions from a trajectory randomly selected from the batch.
However, this does not take into account that the candidate trajectories may be diverse and not equally well suited for the task.
To improve control performance, we propose two different criteria for selecting a trajectory~$\btau_{t:t+H|t}^{0,i(t)}$ from the sampled batch:
\begin{compactitem}
    \item \textbf{\underline{T}emporal Consistency (DPCC-T)}: 
    Frequent replanning using a multimodal trajectory distribution can result in alternating between different behavior modes, which may impact task performance.
    We can avoid this by selecting the trajectory 
    deviating the least from the previous timestep via~$i(t) = \argmin_{j} \big\|\btau_{t:t+H-1|t}^{0,j} - \btau_{t:t+H-1|t-1}^{0,i(t-1)}\big\|_2$.
    %
    %
    \item \textbf{Cumulative Projection \underline{C}ost (DPCC-C)}:     
    We aim to preserve as much information as possible from the learned trajectory distribution~\eqref{eq:meth_learned_traj_dist} despite the modifications to the denoising process~\eqref{eq:meth_backward_diffusion_process}.
    This motivates selecting the trajectory 
    that has been modified the least by the projection operation in~\eqref{eq:meth_backward_diffusion_projected} during the whole denoising process, i.e., ${i(t) = \argmin_{j}\,\sum_{k=1}^K c_{\mathcal{Z}_{\bm{f}}}\big(\bm{\tilde{\tau}}_{t:t+H|t}^{k-1,j}\big)}$,
    %
    %
    where $\tilde{\cdot}$~denotes the trajectories before applying the projection; see~\cref{alg:receding_horizon_control}. 
\end{compactitem}
\section{Evaluation and Discussion}
Our simulation experiments primarily aim at answering the following questions:
\begin{compactitem}
    \item \textbf{Q1}: Can our proposed DPCC algorithm satisfy novel constraints and still solve the learned task?
    \item \textbf{Q2}: How does the proposed constraint-informed trajectory denoising method~\eqref{eq:meth_backward_diffusion_projected} perform compared to existing approaches for incorporating constraints into diffusion models?
    \item \textbf{Q3}: How important is the accuracy of the dynamics model used in the projections~\eqref{eq:meth_projection_with_prior}?
\end{compactitem}

\subsection{Setup}
We conduct our experiments\footnote{Our code is available at \href{https://github.com/ralfroemer99/dpcc}{\color{black}{github.com/ralfroemer99/dpcc}}.} 
in the \texttt{Avoiding} simulation environment~\citep{jia2024towards} shown in~\cref{fig:avoiding}~(a). 
The task is for a robot manipulator to reach a line (green) with its end-effector without colliding with one of the six obstacles (red). 
The state~${\bs_t\in \R^4}$ consists of the current and desired end-effector positions in the 2D~plane, and the action~${\ba_t \in \R^2}$ contains the desired Cartesian velocities, which are sent to a low-level controller.
The training data set~$\mathcal{D}$ contains~$96$ demonstrations, $4$ for each of the~$24$ different ways of navigating around the six obstacles, resulting in a highly multimodal expert trajectory distribution; see~\cref{fig:avoiding}~(b).
We consider different formulations of novel state constraints, which are defined by circular and halfspace areas that the end-effector must not enter, as shown in~\cref{fig:avoiding}~(c). To ensure that the generated trajectories do not violate the control action bounds, we define~$\mathcal{A}_t = \mathcal{A}$, where~$\mathcal{A}$ is the smallest bounding box containing all actions from the demonstration dataset~\eqref{eq:prob_dataset}.
The action constraints are defined as~$\mathcal{A}_t = \mathcal{A}$, where~$\mathcal{A}$ is the smallest bounding box containing all actions from the demonstration dataset~\eqref{eq:prob_dataset}.
We do not assume knowledge of the dynamics of the low-level controller.
Instead, we approximate the system dynamics~\eqref{eq:prob_dynamics} by a simple Euler integration, i.e., ${\bs_{t+1} = \bm{s}_t + \big[\bm{a}_{t}\tp,\, \bm{a}_{t}\tp\big]\tp t_{\text{s}} + \bm{w}_t}$, where~$t_{\text{s}}$ is the sampling time, and the model mismatch~$\bm{w}_t$ accounts for the low-level controller and the numerical error of the Euler integration.
The maximum episode length is~$300$ timesteps.
We estimate the upper bound~$\gamma$ on the model mismatch based on 100 policy rollouts without constraints to be~${\gamma = 0.025}$.
\newlength{\vshift}
\setlength{\vshift}{0.2cm}

\begin{figure}[t]
    \centering
    \begin{subfigure}[b]
        \centering
        \includegraphics[height=27mm, trim={1.9cm 0 2.2cm 0},clip]{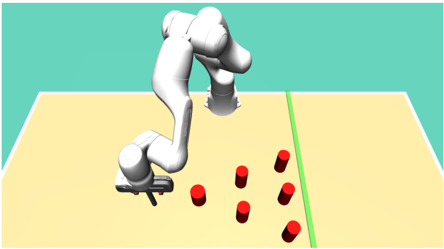}
        \label{fig:avoiding_env}
    \end{subfigure}
    \hfill
    \begin{subfigure}[b]
        \centering
        \includegraphics[height=27mm]{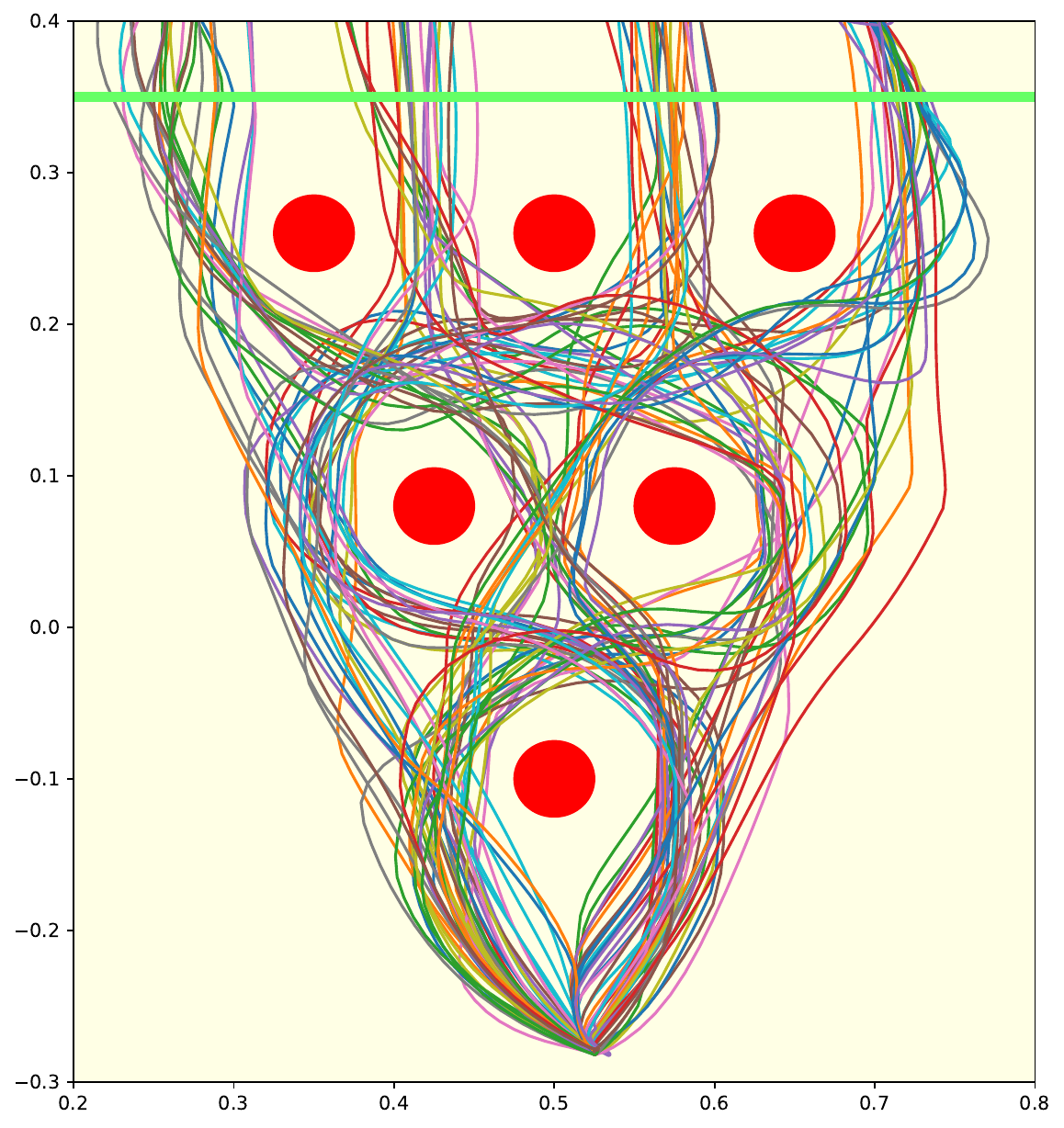}
        \label{fig:avoiding_data}
    \end{subfigure}
    \hfill
    \begin{subfigure}[b]
        \centering
        \includegraphics[height=27mm]{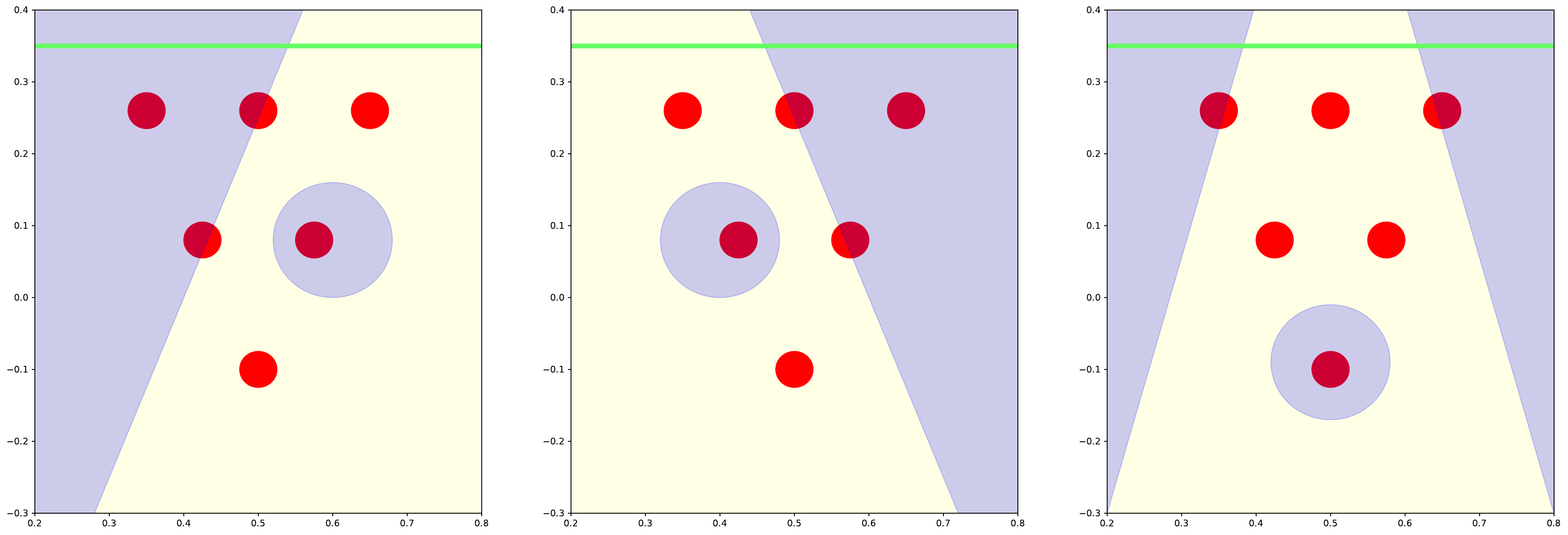}
        \label{fig:avoiding_constraints}
    \end{subfigure}
    \vspace{-3pt}
    \caption{Experiments: (a) Simulation environment, where the objective is to reach the green line with the end-effector without collisions. (b) Multimodal trajectory distribution in the training dataset. (c) Novel test-time constraints (blue).}
    \label{fig:avoiding}
\end{figure}


We use a 1D U-Net~\citep{janner22planning} as the diffusion model backbone~$\bm{\epsilon}_{\bm{\theta}}$, employ the cosine noise schedule~\citep{nichol2021improved} for~$\beta_{1:K}$ with~$K=20$ diffusion steps, and condition on the current state using inpainting. At each timestep, we sample a batch of~$B=4$ trajectories with horizon length~${H+1 = 8}$.
The state constraints render the feasible sets~$\tilde{\mathcal{Z}}_{\bm{f}}$ non-convex, and solve the resulting nonlinear optimization problems in the projections~\eqref{eq:meth_projection_with_prior} using an SLSQP solver~\citep{virtanen2020scipy, kraft1994algorithm}.
Computing an action with DPCC takes about~$80\,\si{ms}$ on a workstation with 64 GB RAM, an NVIDIA Geforce RTX 4090 GPU, and an Intel Core i7-12800HX CPU.

We compare DPCC against three baselines for satisfying constraints with diffusion policies:
\begin{compactitem}
    \item \textbf{Guidance}: The constraints are parameterized via cost functions, and their gradients are used to guide the denoising process towards the feasible set~\citep{carvalho2023motion, kondo2024cgd}. We conduct an ablation study to determine suitable weights for the gradient terms.
    \item \textbf{Post-Processing}: The trajectories are projected into the set~$\mathcal{Z}_{\bm{f}}$ only after the last denoising iteration~\citep{giannone2023aligning, power2023sampling}.
    \item \textbf{Model-Free}: The projection only takes into account the constraints~\eqref{eq:prob_constraints}, but not the system dynamics~\citep{romer2024safe}.
\end{compactitem}
None of these prior works use constraint tightening, but we evaluate their performance with and without our constraint tightening method to ensure the comparison is fair.

We use four evaluation metrics: The number of timesteps to reach the goal, the success rates of 1) reaching the goal and 2) satisfying the constraints and reaching the goal, and the number of timesteps in which the constraints are violated. 
All results are averaged over five training seeds and ten test seeds for each constraint set formulation.

\vspace{-3pt}
\subsection{Results}
\vspace{-1pt}
\pgfplotsset{compat=newest}
\begin{figure}[tb!]
\centering
\setlength{\fwidth}{5cm}
\setlength{\fheight}{2.1cm}
\definecolor{mycolor3}{RGB}{214,39,40}
\definecolor{mycolor2}{RGB}{44,160,44}
\definecolor{mycolor4}{RGB}{14,127,200}
\newlength{\barwidth}
\setlength{\barwidth}{\fwidth/9}
\newlength{\barshift}
\setlength{\barshift}{0.2cm}
\begin{tikzpicture}

\begin{axis}[
    width=\fwidth,
    height=\fheight,
    ticklabel style = {font=\footnotesize},
    label style = {font=\footnotesize},
    xtick style={draw=none}, 
    title={No constraint tightening},
    title style = {font=\footnotesize, yshift=-7},
    scale only axis,
    ybar,
    enlarge x limits=0.25,
    bar width=\barwidth, 
    ylabel={Success rate~$\uparrow$},
    ylabel style={yshift=-3pt},
    ytick={0, 0.5, 1},
    symbolic x coords={DPCC-R, DPCC-T, DPCC-C},
    xticklabel style={text width=0.3 * \fwidth, align=center, yshift=6pt},
    xtick=data,
    ymin=0, ymax=1.15,
    nodes near coords,
    nodes near coords style={font=\scriptsize, yshift=-2.5pt},
    legend style={at={(1,0)}, anchor=south east, legend cell align=left, legend columns=4, align=left,font=\scriptsize},
    ]
    \addplot[fill=mycolor2] coordinates {(DPCC-R, 0.97) (DPCC-T, 0.95) (DPCC-C, 0.96)};    
    \addplot[fill=mycolor3] coordinates {(DPCC-R, 0.41) (DPCC-T, 0.51) (DPCC-C, 0.49)};     
    \legend{Goal, Constraints \& goal}
\end{axis}

\begin{axis}[
    width=\fwidth,
    height=\fheight,
    xshift=\fwidth + 1cm,
    ticklabel style = {font=\footnotesize},
    label style = {font=\footnotesize},
    xtick style={draw=none}, 
    title={With constraint tightening},
    title style = {font=\footnotesize, yshift=-7},
    scale only axis,
    ybar,
    enlarge x limits=0.25,
    bar width=\barwidth, 
    ylabel style={yshift=-3pt},
    ytick={0, 0.5, 1},
    symbolic x coords={DPCC-R, DPCC-T, DPCC-C},
    xticklabel style={text width=0.3 * \fwidth, align=center, yshift=6pt},
    xtick=data,
    ymin=0, ymax=1.15,
    nodes near coords,
    nodes near coords style={font=\scriptsize, yshift=-2.5pt},
    legend style={at={(1,0)}, anchor=south east, legend cell align=left, legend columns=4, align=left,font=\scriptsize},
    ]
    \addplot[fill=mycolor2] coordinates {(DPCC-R, 1.00) (DPCC-T, 0.99) (DPCC-C, 1.00)};    
    \addplot[fill=mycolor3] coordinates {(DPCC-R, 0.96) (DPCC-T, 0.99) (DPCC-C, 0.98)};     
    \legend{Goal, Constraints \& goal}
\end{axis}

\begin{axis}[
    width=\fwidth,
    height=\fheight,
    yshift=-(\fheight + 0.5cm),
    ticklabel style = {font=\footnotesize},
    label style = {font=\footnotesize},
    xtick style={draw=none}, 
    title style = {font=\footnotesize, yshift=-7},
    scale only axis,
    ybar,
    enlarge x limits=0.25,
    bar width=\barwidth, 
    ylabel={Timesteps~$\downarrow$},
    ylabel style={yshift=-3pt},
    ytick={60, 70, 80, 90, 100},
    symbolic x coords={DPCC-R, DPCC-T, DPCC-C},
    xticklabel style={text width=0.3 * \fwidth, align=center, yshift=6pt},
    xtick=data,
    ymin=53, ymax=100,
    nodes near coords,
    nodes near coords style={font=\scriptsize, yshift=-2.5pt, color=black},
    ]
    \addplot+[
        fill=mycolor4, 
        draw=black,
        error bars/.cd,
        y dir=both, 
        y explicit, 
        error bar style={color=black}
    ] coordinates {
        (DPCC-R, 78.6) +- (0, 19.1) 
        (DPCC-T, 74.8) +- (0, 21.1)
        (DPCC-C, 76.1) +- (0, 21.5)
    };    
\end{axis}

\begin{axis}[
    width=\fwidth,
    height=\fheight,
    xshift=\fwidth + 1cm,
    yshift=-(\fheight + 0.5cm),
    ticklabel style = {font=\footnotesize},
    label style = {font=\footnotesize},
    xtick style={draw=none}, 
    title style = {font=\footnotesize, yshift=-7},
    scale only axis,
    ybar,
    enlarge x limits=0.25,
    bar width=\barwidth, 
    ylabel style={yshift=-3pt},
    ytick={60, 70, 80, 90, 100},
    symbolic x coords={DPCC-R, DPCC-T, DPCC-C},
    xticklabel style={text width=0.3 * \fwidth, align=center, yshift=6pt},
    xtick=data,
    ymin=53, ymax=100,
    nodes near coords,
    nodes near coords style={font=\scriptsize, yshift=-2.5pt, color=black},
    ]
    \addplot+[
        fill=mycolor4, 
        draw=black,
        error bars/.cd,
        y dir=both, 
        y explicit, 
        error bar style={color=black}
    ] coordinates {
        (DPCC-R, 73.0) +- (0, 10.6) 
        (DPCC-T, 66.2) +- (0, 8.6)
        (DPCC-C, 69.0) +- (0, 12.9)
    };    
\end{axis}

\end{tikzpicture}%
\vspace{-5pt}
\caption{Impact of our constraint tightening method and the trajectory selection criterion (DPCC-R: random, DPCC-T: temporal consistency, DPCC-C: cumulative projection cost) on the success rates 
and the number of timesteps needed.
}
\label{fig:selection_tightening_impact}
\vspace{-3pt}
\end{figure}

To answer \textbf{Q1}, we conduct an ablation study with regard to two key components of DPCC: The constraint-tightening method~(\cref{sec:constraing_tightening}) and the trajectory selection mechanism~(\cref{sec:trajectory_selection}). 
For the latter, we also compare against selecting randomly, referred to as DPCC-R.
The results, including standard deviations, are shown in~\cref{fig:selection_tightening_impact}. 
Projecting onto the non-tightened state constraints~$\mathcal{S}_t$ only results in constraint satisfaction success rates below~$50\%$.
By utilizing our constraint-tightening approach, we achieve close to~$100\%$ success rate for all three trajectory selection criteria.
The main performance difference between the selection criteria is the number of time steps needed to reach the goal, which is higher for DPCC-R than for DPCC-T and DPCC-C.
One reason for this is that DPCC-T and, to a lesser extent, DPCC-C result in more temporally consistent closed-loop behavior, as shown in~\cref{fig:selection_criteria_visualization}.
These results show the potential of exploiting the fact that diffusion models can generate multiple trajectories at each timestep without increased computation.
\begin{figure}[tbh!]
    \small
    \centering
    \begin{minipage}{0.32\linewidth}
        \centering
        DPCC-R (random)\\
        \includegraphics[width=\linewidth, trim={0 0 0 0.1cm},clip]{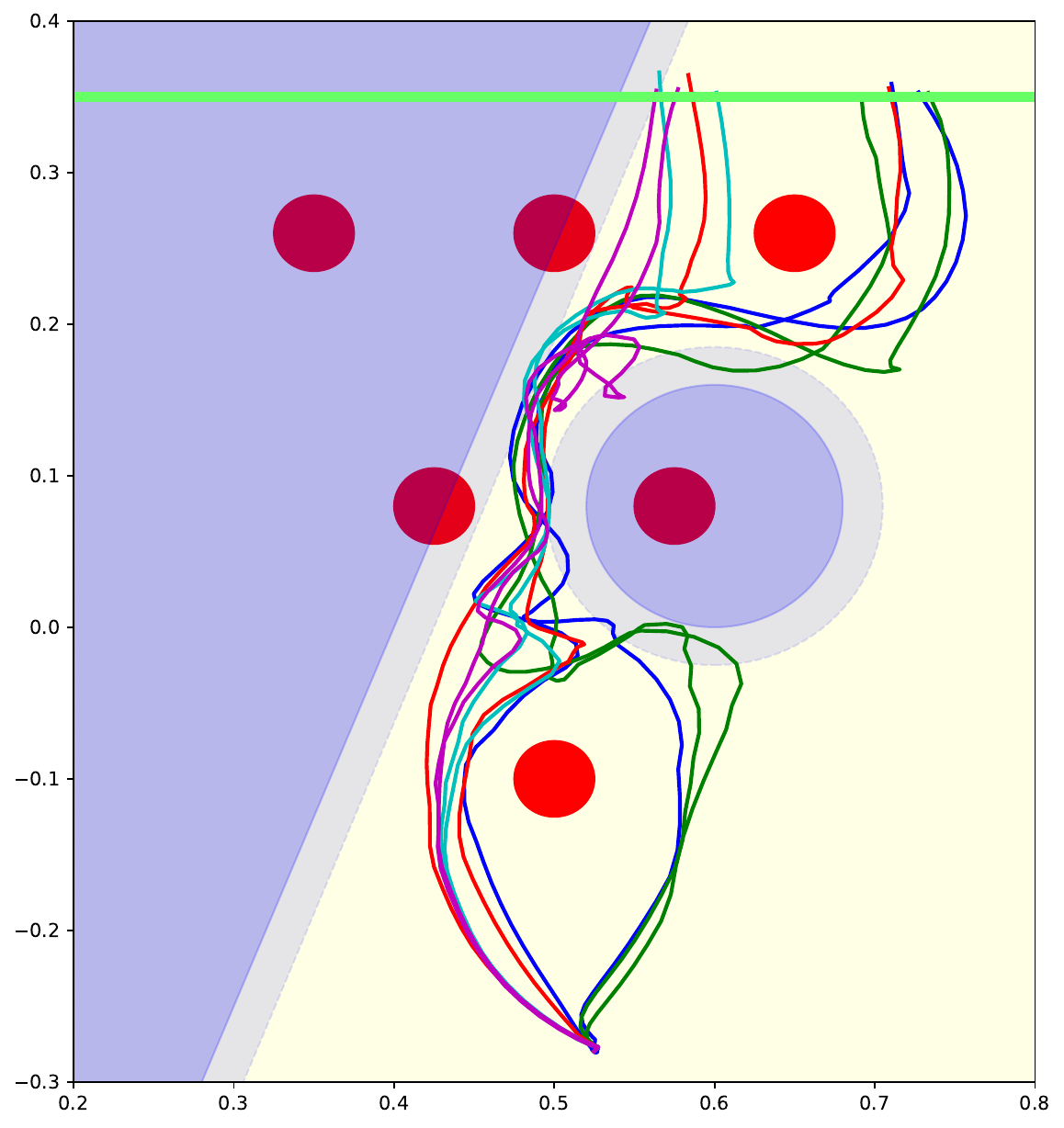}
    \end{minipage}
    \hfill
    \begin{minipage}{0.32\linewidth}
        \centering
        DPCC-T (temporal consistency)\\
        \includegraphics[width=\linewidth, trim={0 0 0 0.1cm},clip]{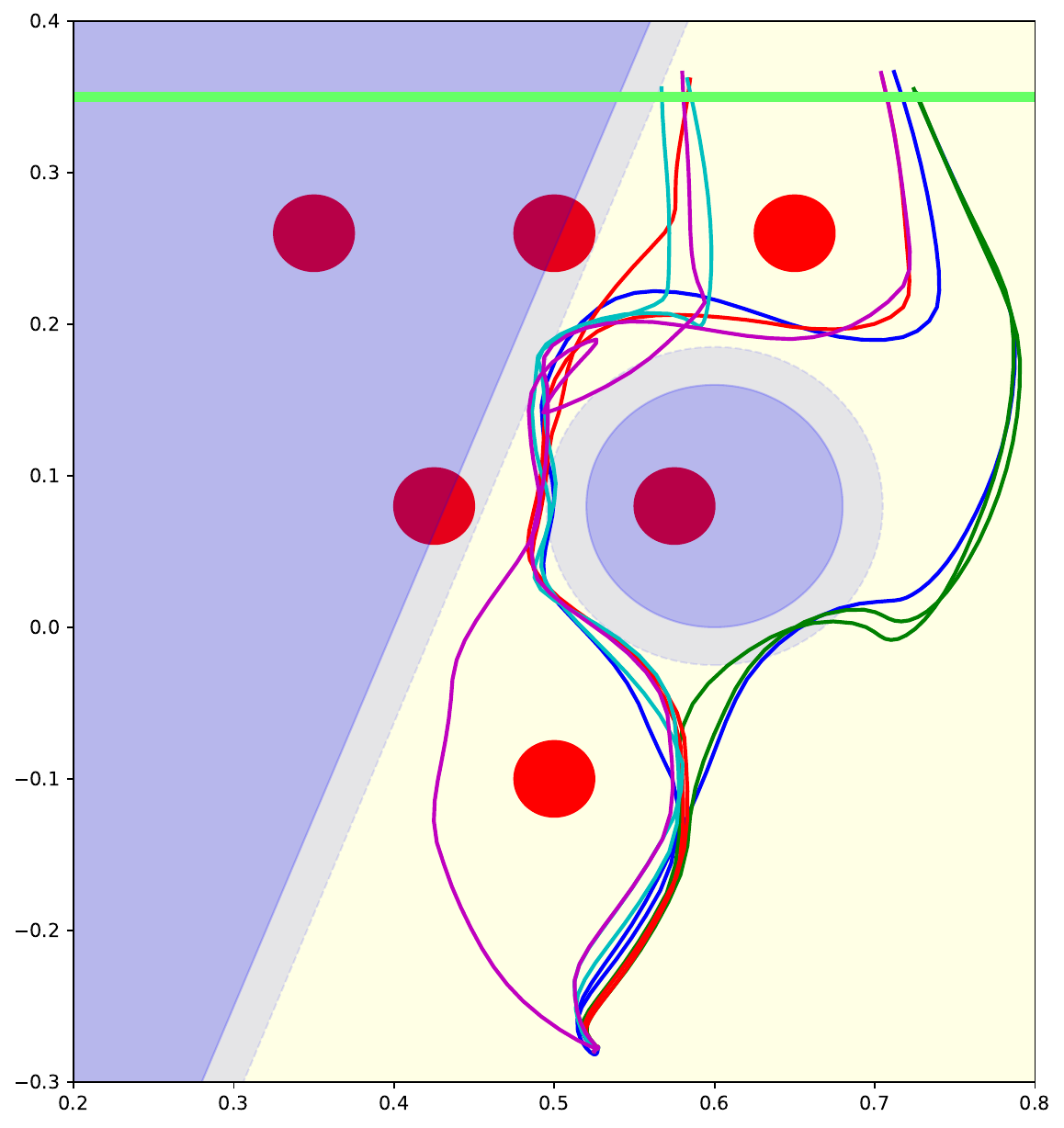}
    \end{minipage}
    \hfill
    \begin{minipage}{0.32\linewidth}
        \centering
        DPCC-C (projection costs)\\
        \includegraphics[width=\linewidth, trim={0 0 0 0.1cm},clip]{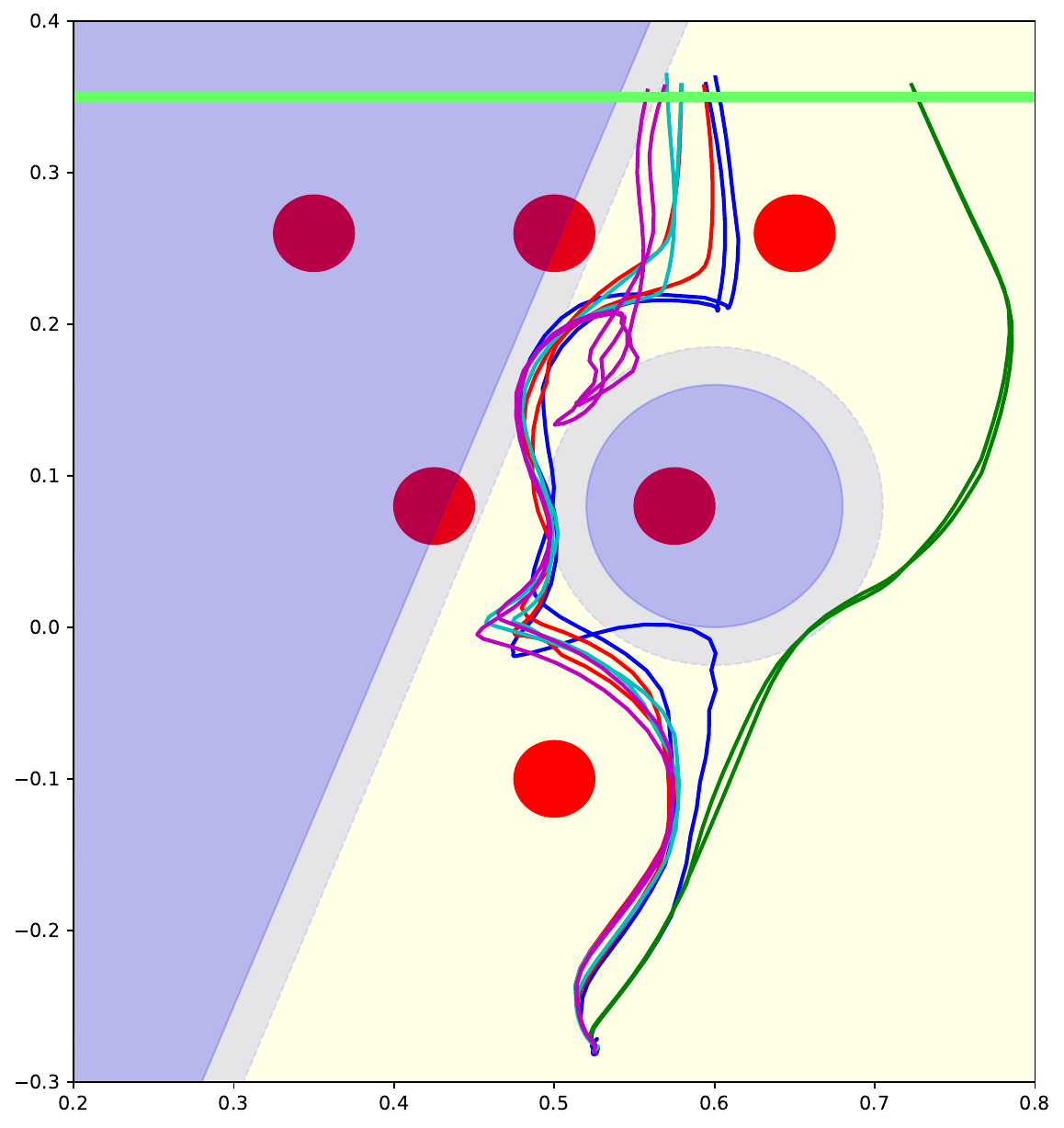}
    \end{minipage}
    \vspace{-6pt}
    \caption{Closed-loop trajectories with DPCC for different trajectory selection criteria and five training seeds, which are indicated by the trajectories' colors. The tightened constraints are visualized in light blue. With our proposed trajectory selection criteria (DPCC-T and DPCC-C), we obtain a smoother behavior and shorter time to reach the goal.}
    \label{fig:selection_criteria_visualization}
\end{figure}

We report the results of our baseline comparison (\textbf{Q2}) in~\cref{tab:constraints_performance} and also include Diffuser~\citep{janner22planning}, which does not take into account constraints at all.
DPCC-C has the highest success rate and the smallest number of constraint violations and, on average, reaches the goal significantly faster than all other methods.
This highlights that DPCC retains very good task performance by sampling approximately from the conditional distribution~\eqref{eq:meth_trajectory_distribution_feasible}, which encodes both the learned goal-reaching behavior and constraint satisfaction.
We find that for cost guidance, the trajectory is often either not pushed out of the unfeasible region completely, resulting in poor constraint performance, or pushed far away from the boundary of the feasible set, such that reaching the goal requires more timesteps.
The model-free projections perform poorly in our experiments.
This shows that although the learned distribution~\eqref{eq:meth_learned_traj_dist} contains information about the dynamics, the learned mean~$\bm{\mu}_{\bm{\theta}}$ cannot restore a denoised trajectory's dynamic feasibility if it is destroyed by iterative model-free projections, resulting in a trajectory that the system cannot follow.
Post-Processing performs better than the other baselines but also needs significantly more timesteps than DPCC-C.
\begin{table}[bt!]
    \scriptsize
    \centering
    \begin{tabular}{c c c c c c}
        \hline
        & Constraint tightening & Timesteps & Goal & Constraints \& goal & \# Constraint violations \\
        \hline        
        Diffuser 
        & - & $76.7^{\pm 12.7}$ & $\bm{1.00}$ & $0.07$ & $17.8^{\pm 12.1}$ \\
        \multirow{2}{*}{Guidance} & no
        & $74.5^{\pm 11.8}$ & $0.99$ & $0.09$ & $17.3^{\pm 12.6}$ \\
        & yes 
        & $75.6^{\pm 12.4}$ & $0.96$ & $0.13$ & $17.4^{\pm 14.2}$ \\
        \multirow{2}{*}{Post-Processing} & no
        & $84.5^{\pm 21.5}$ & $0.95$ & $0.32$ & $8.2^{\pm 13.9}$ \\ 
        & yes 
        & $79.1^{\pm 13.9}$ & $\bm{1.00}$ & $0.96$ & $0.1^{\pm 0.5}$ \\ 
        \multirow{2}{*}{Model-Free} & no
        & $76.7^{\pm 12.4}$ & $0.99$ & $0.07$ & $17.8^{\pm 12.4}$ \\
        & yes 
        & $76.1^{\pm 12.1}$ & $0.99$ & $0.07$ & $18.0^{\pm 12.0}$ \\
        \multirow{2}{*}{DPCC-C (ours)} & no
        & $76.1^{\pm 21.5}$ & $0.96$ & $0.49$ & $6.0^{\pm 11.5}$ \\
        & yes 
        & $\bm{69.0^{\pm 12.9}}$ & $\bm{1.00}$ & $\bm{0.98}$ & $\bm{0.0^{\pm 0.3}}$ \\
        \hline
    \end{tabular}
    \vspace{-6pt}
    \caption{Comparison of DPCC against other approaches for diffusion-based receding-horizon control with constraints.}
    \label{tab:constraints_performance}
    \vspace{-10pt}
\end{table}

\setlength{\tabcolsep}{3.5pt}
\begin{wraptable}{r}{0.56\textwidth}
    \vspace{-4pt}
    \scriptsize
    \centering
    \begin{tabular}{c c c c c}
        \hline
        $\hat{t}_{\text{s}} / t_{\text{s}}$ & Timesteps & Goal & Constraints \& goal & \# Constraint violations \\
        \hline
        $0.25$ & $85.7^{\pm16.4}$ & $\bm{1.00}$ & $0.86$ & $0.3^{\pm0.8}$ \\
        $0.5$ & $73.1^{\pm9.8}$ & $\bm{1.00}$ & $\bm{0.99}$ & $\bm{0.0^{\pm0.3}}$ \\
        $1$ & $\bm{69.0^{\pm 12.9}}$ & $\bm{1.00}$ & $0.98$ & $\bm{0.0^{\pm0.3}}$ \\
        $2$ & $76.6^{\pm14.8}$ & $0.99$ & $0.95$ & $0.3^{\pm2.1}$ \\
        $4$ & $152.0^{\pm26.3}$ & $0.88$ & $0.77$ & $0.6^{\pm 1.8}$ \\
        \hline
    \end{tabular}
    \vspace{-5pt}
    \caption{Impact of the model mismatch between the dynamics used in the constraint set projection~\eqref{eq:meth_projection_with_prior}, which assume a sampling time~$\hat{t}_{\text{s}}$, and the true dynamics with sampling time~$t_{\text{s}}$, for DPCC-C.}
    \label{tab:model_mismatch_impact}
\end{wraptable}

We have seen that neglecting the dynamics in the projections results in poor constraint satisfaction.
To better understand the impact of the dynamics model used in~\eqref{eq:meth_projection_with_prior} (\textbf{Q3}), we consider a mismatch between the assumed sampling time~$\hat{t}_{\text{s}}$ and its true value~$t_{\text{s}}$.
The results provided in~\cref{tab:model_mismatch_impact} demonstrate that even with a significant deviation by a factor of~$4$, the constraints can be satisfied in most cases. 
This shows that the iterative constraint set projections yield much better results with even a very inaccurate dynamics model than when using no model.

\vspace{-5pt}
\section{Conclusion}
\vspace{-2pt}
DPCC combines the expressivity of diffusion models for offline policy learning with the ability of predictive control to satisfy constraints online in closed-loop operation.
We show that incorporating model-based projections into the trajectory denoising process allows us to sample future trajectories that are constraint-satisfying, dynamically feasible, and suitable for solving the learned task.
Our experiments do not consider time-varying constraints, but DPCC can handle them directly without modifications.
In future work, we aim to include additional notions of safety, such as stability.

\newpage
\section*{Acknowledgements}
Ralf R\"omer gratefully acknowledges the support of the research group ConVeY funded by the German Research Foundation under grant GRK 2428. This work has been supported by the Robotics Institute Germany, funded by BMBF grant 16ME0997K.
\bibliography{ref}

\newpage
\appendix
\label{sec:appendix}
\section{Training Details}
Our training hyperparameters are provided in~\cref{tab:hyperparams}.
We use the cosine learning rate scheduler~\citep{von_platen2024diffusers} with~$10^3$ warmup steps.
We split the dataset into~$90\%$ training and~$10\%$ validation data and use the model that performs best on the validation data for testing.
Training a single trajectory diffusion model takes about~$30$~minutes on a workstation with 64 GB RAM, an NVIDIA Geforce RTX 4090 GPU, and an Intel Core i7-12800HX CPU.
\begin{table}[h!]
    \small
    \centering
    \begin{tabular}{c c}
    \hline
        Hyperparameter & Value \\
        \hline
        Optimizer & Adam~\citep{kingma2014adam} \\
        Batch size & $8$ \\
        Diffusion steps~$K$ & $20$ \\
        Learning rate & $1 \times 10^{-4}$ \\
        Training steps & $10^5$ \\
        Epochs & $100$ \\
        \hline
    \end{tabular}
    \caption{Hyperparameters for training the trajectory diffusion model.}
    \label{tab:hyperparams}
\end{table}

\section{Testing Details}
The novel state constraints visualized in~\cref{fig:avoiding}~(c) are satisfied by~$10.4\%$, $4.2\%$ and~$7.3\%$ of the demonstrations shown in~\cref{fig:avoiding}~(b), and the tightened constraints are satisfied by~$1.0\%$, $0\%$ and~$2.1\%$ of the demonstrations, respectively.
Therefore, we cannot use rejection sampling from the learned distribution to obtain constraint-satisfying trajectories.
The state constraint sets can be formulated as
\begin{align*}
    \mathcal{S}_t = \big\{\bs_t \in \mathcal{S}|\,\bm{A} \bs_t \leq \bm{b}, \;\|\bs_t - \bm{p}\|_2^2 \geq r^2\big\}, \qquad \forall t.
\end{align*}
We assume that the actual and desired position evolve via the Euler integration
\begin{align*}
    {\bs_{t+1} = \bm{s}_t + 
    \begin{bmatrix}
        \ba_t \\ \ba_t
    \end{bmatrix} 
    t_{\text{s}} + \bm{w}_t},
\end{align*}
which results in additional linear equality constraints on the generated trajectories.
As~$\mathcal{S}_t$ is non-convex, computing the trajectory projection into the set~$\tilde{\mathcal{Z}}_{\bm{f}}$ requires solving a non-convex quadratic optimization problem.
For this, we use a nonlinear SLSQP solver~\citep{virtanen2020scipy, kraft1994algorithm}.
Because the trajectories generated by the diffusion model are normalized, we normalize the constraint sets as well to compute the projections more efficiently without having to un-normalize the trajectories.
For this, we use the fact that limit normalization via
\begin{align*}
    s_{n,i} = 2\frac{s_i - \underline{s}_i}{\bar{s}_i - \underline{s}_i} - 1,
\end{align*}
where~$s_{n,i}$ is the normalized state dimension~$i$, and~$\bar{s}_i$ and~$\underline{s}_i$ are the corresponding upper and lower limit of the states in the training dataset, respectively, is an affine transformation.



\end{document}